\colorlet{shadecolor}{yellow}
\newtheorem{proposition}{Proposition}[section] 
\begin{document}
\bstctlcite{IEEEexample:BSTcontrol}
\title{Lorentz-Equivariant Quantum Graph Neural Network for High-Energy Physics}
\author{Md~Abrar~Jahin, 
    Md.~Akmol~Masud, 
    Md~Wahiduzzaman~Suva, 
    M.~F.~Mridha,~\IEEEmembership{Senior Member,~IEEE,}
    and Nilanjan~Dey,~\IEEEmembership{Senior Member,~IEEE}

\thanks{M. A. Jahin is with Okinawa Institute of Science and Technology Graduate University, Okinawa 904-0412, Japan (e-mail: abrar.jahin.2652@gmail.com).}
\thanks{M. A. Masud is with the Institute of Information Technology, Jahangirnagar University, Dhaka, 1342, Bangladesh (e-mail: akmolmasud5@gmail.com ).}%
\thanks{M. W. Suva and M. F. Mridha are with the Department of Computer Science, American International University-Bangladesh, Dhaka 1229, Bangladesh (e-mail: wahedshuvo36@gmail.com; firoz.mridha@aiub.edu).}
\thanks{N. Dey is with the Department of Computer Science \& Engineering, Techno International New Town, New Town, Kolkata, 700156, India (e-mail: nilanjan.dey@tint.edu.in).}}  



\markboth{
}{Jahin \MakeLowercase{\textit{et al.}}: Lorentz-EQGNN}


\maketitle

\begin{abstract}
The rapid data surge from the high-luminosity Large Hadron Collider introduces critical computational challenges requiring novel approaches for efficient data processing in particle physics. Quantum machine learning, with its capability to leverage the extensive Hilbert space of quantum hardware, offers a promising solution. However, current quantum graph neural networks (GNNs) lack robustness to noise and are often constrained by fixed symmetry groups, limiting adaptability in complex particle interaction modeling. This paper demonstrates that replacing the classical Lorentz Group Equivariant Block modules in LorentzNet with a dressed quantum circuit significantly enhances performance despite using $\approx5.5$ times fewer parameters. Additionally, quantum circuits effectively replace MLPs by inherently preserving symmetries, with Lorentz symmetry integration ensuring robust handling of relativistic invariance. Our \ul{Lorentz}-\ul{E}quivariant \ul{Q}uantum \ul{G}raph \ul{N}eural \ul{N}etwork (Lorentz-EQGNN) achieved 74.00\% test accuracy and an AUC of 87.38\% on the Quark-Gluon jet tagging dataset, outperforming the classical and quantum GNNs with a reduced architecture using only 4 qubits. On the Electron-Photon dataset, Lorentz-EQGNN reached 67.00\% test accuracy and an AUC of 68.20\%, demonstrating competitive results with just 800 training samples. Evaluation of our model on generic MNIST and FashionMNIST datasets confirmed Lorentz-EQGNN’s efficiency, achieving 88.10\% and 74.80\% test accuracy, respectively. Ablation studies validated the impact of quantum components on performance, with notable improvements in background rejection rates over classical counterparts. These results highlight Lorentz-EQGNN’s potential for immediate applications in noise-resilient jet tagging, event classification, and broader data-scarce HEP tasks. 
\end{abstract}

\begin{IEEEImpStatement}
The Lorentz-Equivariant Quantum Graph Neural Network (Lorentz-EQGNN) presented in this study offers a significant leap in quantum machine learning for high-energy physics (HEP) applications. Designed to handle unprecedented data volumes expected from the Large Hadron Collider, Lorentz-EQGNN effectively addresses limitations in current quantum graph neural networks (GNNs), such as fixed symmetry constraints and sensitivity to noise. By embedding Lorentz symmetry directly within a quantum GNN architecture, our model achieves robust performance in particle interaction modeling with only 4 qubits and a minimal parameter count. Unlike classical models and other quantum GNNs, Lorentz-EQGNN is symmetry-adaptive, enabling it to generalize well in data-scarce environments typical of HEP. Our approach outperforms existing models in quark-gluon jet tagging and electron-photon discrimination, offering a practical solution for critical HEP tasks with fewer computational resources. This adaptability to complex symmetries demonstrates Lorentz-EQGNN's potential to enhance applications across experimental physics, particularly in tasks that benefit from noise resilience and symmetry preservation. The model’s success suggests broad applicability beyond HEP, setting a new standard for efficient, symmetry-preserving AI in particle physics and potentially influencing the design of future quantum architectures.
\end{IEEEImpStatement}

\begin{IEEEkeywords}
Lorentz symmetry, quantum graph neural networks, parameterized quantum circuits, particle physics, Lie-algebraic principles
\end{IEEEkeywords}

%
\IEEEpeerreviewmaketitle



\section{Introduction}
\IEEEPARstart{T}he unprecedented data volumes expected from the high-luminosity stage of the Large Hadron Collider (LHC)~\cite{apollinari_high-luminosity_2015} represent a significant computational challenge for particle physics research. Addressing this demand necessitates advances in computational efficiency to effectively manage and analyze such vast datasets~\cite{apollinari_high-luminosity_2015}. Quantum machine learning (QML) has emerged as a promising approach, offering the potential to lower classical algorithm time complexity through the unique advantages of quantum computing, specifically leveraging the exponentially large Hilbert space accessible on quantum hardware~\cite{schuld_quantum_2019}. In particle physics, symmetries form the foundation of our understanding of subatomic interactions. For instance, the standard model (SM) is built upon gauge and Lorentz invariance principles, which define how particles and fields interact within spacetime~\cite{gross_role_1996}. In parallel, symmetry is equally significant in machine learning, where it has been shown that a neural network invariant under a specific group can represent outputs as functions of the group’s orbits. This concept has catalyzed the development of the field of geometric deep learning~\cite{bronstein_geometric_2021}, which suggests that deep learning’s success in high-dimensional spaces, despite the curse of dimensionality, can be attributed to two primary inductive biases: symmetry and scale separation. These inductive biases streamline the hypothesis space, enabling models to become more compact and data-efficient, qualities especially advantageous in quantum computing given current hardware constraints.

The deep connection between symmetries and deep learning has driven the development of numerous models designed to be invariant to specific groups. For instance, convolutional neural networks (CNNs) excel in computer vision due to their natural invariance to image translations, while transformer-based language models demonstrate permutation invariance. When provided with suitable data, these architectures effectively learn stable latent representations aligned with their respective symmetry groups. In QML, leveraging symmetries presents a promising avenue, particularly in particle physics~{\cite{forestano_deep_2023,forestano_comparison_2024}}. Given that collision event data is often represented as graphs, graph neural networks (GNNs)~{\cite{velickovic_everything_2023}} are a fitting choice for applications in particle physics~{\cite{shlomi_graph_2020}}. Additionally, Lorentz symmetry plays a foundational role as a spacetime symmetry in any relativistic model of elementary particles. A robust approach to achieve invariance under any Lorentz transformation is via equivariant neural networks, where each layer’s output transforms accordingly if the input undergoes a Lorentz transformation. This setup allows stacking equivariant layers to build a symmetry-preserving deep network, ensuring that the final classification remains unchanged under Lorentz transformations. LorentzNet~{\cite{gong_efficient_2022}} represents the leading approach for incorporating Lorentz symmetry into GNNs, with applications such as jet tagging—identifying the particle responsible for generating a jet; however, key challenges remain in achieving data and computational efficiency in terms of parameter count, layer depth, and size of input data. Moreover, current QML models struggle with noise, which is a major issue in quantum systems, as hardware-induced noise can disrupt symmetry~{\cite{dong_2_2024}}. Moreover, most quantum GNNs rely on fixed symmetry groups, making them less flexible in handling the complex and changing symmetries of particle interactions.

To address these challenges, we aim to develop an approach that learns symmetries directly from the data. QML presents a promising alternative in this context. While quantum computing hardware is still in its early stages, extensive research is dedicated to designing algorithms that harness its capabilities. Quantum algorithms are recognized for providing computational benefits over classical methods for certain tasks, like Shor’s algorithm, which can break down numbers significantly more efficiently than conventional techniques~\cite{wan_quantum_2017}. Additionally, studies indicate that QML could yield significant computational speedups~\cite{larocca_theory_2023}. By leveraging quantum computing’s potential to capture quantum correlations naturally, our research seeks to build a more resilient, symmetry-adaptive quantum GNN model. This design aims to overcome issues of noise and rigidity, enhancing model performance while reducing both data requirements and computational resources.

We introduce the \ul{Lorentz}-\ul{E}quivariant \ul{Q}uantum \ul{G}raph \ul{N}eural \ul{N}etwork (Lorentz-EQGNN), a model engineered to adapt to symmetries defined by any Lie algebra. This innovative approach facilitates flexible handling of symmetries while preserving their properties even in noise, making it particularly well-suited for high-energy physics (HEP) applications. The 4-vectors input are scalarized by Lorentz-EQGNN directly to obtain Lorentz symmetry by representing jets as a particle cloud. In particular, we employ Minkowski dot product attention, which aggregates the four vectors using weights derived from Lorentz-invariant geometric quantities based on the Minkowski metric. The universal approximation theorem for Lie equivariant mappings is considered in the design of Lorentz-EQGNN, which guarantees both universality and equivariance in its functionality. The significant advantage of Lorentz-EQGNN arises from replacing the multilayer perceptron (MLP) components of LorentzNet with parameterized quantum circuits (PQCs), resulting in a more efficient architecture for both training and inference. This substitution also allows Lorentz-EQGNN to be smaller than LorentzNet, demonstrating the benefits of improved symmetry adaptation through our approach. We demonstrate our architecture on two problems from experimental HEP: quark-gluon jet tagging and electron-photon discrimination datasets, which is an ideal testing ground because experimental collider physics data are known to have a significant amount of complexity, and computational resources represent a major bottleneck~{\cite{larkoski_jet_2020}}. Lorentz-EQGNN is tailored for these HEP tasks with lower parameter and qubit counts, fewer data requirements, and shallow layer depth. Our model also shows better generalization on two generic datasets with two times less qubit size. Remarkably, its performance surpasses the classical benchmark, LorentzNet, in terms of accuracy and efficiency with even 6 times less layer depth despite the nascent stage of quantum computing. Our work focuses on the utility of quantum systems in the noisy intermediate-scale quantum (NISQ) era, emphasizing their practical advantages in speed, accuracy, and energy efficiency when compared to classical systems of similar scale and cost~{\cite{herrmann2023}}.

Our novel contributions in this research are 3-fold:
\begin{enumerate}
\item Our approach introduces a Lie-algebra adaptive quantum GNN that learns symmetry constraints from HEP data, improves data efficiency by introducing dimension reducer, thereby using the minimal number of qubits, and improves noise resilience.
\item We benchmark our Lorentz-EQGNN against a classical CNN, 3 quantum models, and two hybrid models on two HEP and two generic datasets and achieved competitive results with fewer data, parameters, circuit depth, and qubit requirements.
\item The ablation studies revealed the computational efficiency and superior accuracy of Lorentz-EQGNN components over classical state-of-the-art LorentzNet with six times fewer layer counts.
\end{enumerate}

The remainder of this paper is organized as follows: Section \hyperref[sec2]{II} reviews related work in classical and quantum GNNs, followed by Lorentz-equivariance. Section \hyperref[sec2]{III} details the architecture and theoretical foundations of our proposed Lorentz-EQGNN model and the other developed models for benchmarking. Section \hyperref[sec2]{IV} presents our experimental setup and datasets used for evaluation, followed by the experimental results and discussion in Section \hyperref[sec2]{V}. Finally, Section \hyperref[sec2]{VI} summarizes the key findings and outlines future research directions.

\section{Related Work}\label{sec2}
\subsection{Geometric Deep Learning and Transformer Models}
Recent advancements in geometric deep learning and transformer-based architectures have significantly contributed to HEP tasks. The Lorentz Geometric Algebra Transformer (L-GATr)~{\cite{spinner2024lorentzequivariant}} ensures Lorentz equivariance while modeling data in a geometric algebra framework, and the HEPT transformer~{\cite{siqi2024}} achieves efficient processing for large-scale point cloud data. SUPA simulator~{\cite{kumar2023}}, a lightweight diagnostic simulator, has been developed, connecting directly to geometric deep learning methods to address the computational challenges of particle shower simulations. However, existing methods struggle with noise resilience, fixed symmetry constraints, and data inefficiency, particularly under current hardware constraints and large data volumes in HEP. In the quantum domain, Unlu et al.~{\cite{unlu_hybrid_2024}} introduced hybrid quantum vision transformers for event classification, while Cara et al.~{\cite{cara2024}} applied quantum vision transformers to jet classification. Tesi et al.~{\cite{tesi2024qvit}} incorporated quantum attention mechanisms into vision transformers. These developments highlight the growing integration of geometric symmetries, efficient processing techniques, and quantum enhancements to address the complexities of HEP data, but further innovations are needed to enhance scalability, symmetry adaptability, and computational efficiency in resource-constraint environments.

\subsection{Classical Graph Neural Networks}
GNNs are a unique type of neural network created to handle data organized as graphs. In contrast to conventional neural networks, which operate on Euclidean data such as images or sequences, GNNs have the capability to model complex dependencies and relationships between the nodes of a graph. The core idea behind GNNs is an iterative update mechanism that refines each node's representation by aggregating information from its neighboring nodes. This method allows the network to understand both small-scale and large-scale patterns in the graph. 

Formally, let \( G = (V, E) \) denote a graph, with \( V \) representing the collection of nodes and \( E \) indicating the collection of edges. Each node \( v \in V \) is initially linked to a feature vector \( \mathbf{h}_v^{(0)} \). The features of the nodes are modified through several layers of the GNN. At each layer \( l \), the node's feature vector is modified by combining its own feature vector with those of its neighboring nodes \( \mathcal{N}(v) \). This can be expressed mathematically as:
\begin{equation}
\mathbf{h}_v^{(l+1)} = \sigma \left( \mathbf{W}^{(l)} \mathbf{h}_v^{(l)} + \sum_{u \in \mathcal{N}(v)} \mathbf{W}_e^{(l)} \mathbf{h}_u^{(l)} \right)
\end{equation}
where \( \mathbf{W}^{(l)} \) and \( \mathbf{W}_e^{(l)} \) are learnable weight matrices, \( \sigma \) is a non-linear activation function, and \( \mathcal{N}(v) \) represent the collection of adjacent nodes to \( v \). This iterative process continues through several layers, which allows the network to capture higher-order neighborhood information. The node representations obtained can be used for various tasks like classifying nodes, predicting links, and classifying graphs.

\subsection{Quantum Graph Neural Networks (QGNNs)}
With classical GNNs focusing on capturing the topological structure of graphs in low-dimensional representations, the focus has now shifted to QGNNs. Developing QGNNs in a methodical manner requires pinpointing a Hamiltonian that characterizes the qubits and their relationships in the graph. Utilizing the transverse-field Ising model~{\cite{forestano_comparison_2024}}, created to examine phase changes in magnetic systems, can lead to this achievement. The model is represented by the Hamiltonian as follows:
\begin{equation}\label{eq2}
\hat{H}_{\text{Ising}}(\boldsymbol{\phi}) = \sum_{(i, j) \in E} \mathcal{W}_{ij} Z_i Z_j + \sum_i \mathcal{M}_i Z_i + \sum_i X_i
\end{equation}
where \( \phi = \{\mathcal{W}, \mathcal{M}\} \), \( E \) denotes the set of all pairs of nodes, \( \mathcal{W}_{ij} \) is a learnable edge weight matrix, and the coupling term encapsulates the interactions between nodes through the coupled $Pauli-Z$ operators \( Z_i \) and \( Z_j \), which act on nodes \( i \) and \( j \) respectively. The first-order terms characterize the node weights, where each \( \mathcal{M}_i \) represents a learnable feature corresponding to the node's weight, effectively measuring the energy of the individual qubits. To utilize the Hamiltonian within a quantum computing framework, it is necessary to convert it into a unitary operator. This conversion is achieved through the time-evolution operator:
\begin{equation}
U = e^{-itH}
\end{equation}
which, when applied to the Ising Hamiltonian, takes the form:
\begin{equation}
U = e^{-it\hat{H}_{\text{Ising}}} = e^{\left[-i t \left( \sum_{(i, j) \in E} \mathcal{W}_{ij} Z_i Z_j + \sum_i \mathcal{M}_i Z_i + \sum_i X_i \right)\right]}
\end{equation}
However, implementing this expression directly on quantum hardware presents significant challenges. By applying the Trotter-Suzuki decomposition~\cite{hatano_finding_2005}, an approximation is obtained that is more feasible for quantum execution:\vskip-\parskip
\footnotesize
\begin{equation}
e^{\left[-i t \left( \sum_{(i, j) \in E} \mathcal{W}_{ij} Z_i Z_j + \sum_i \mathcal{M}_i Z_i + \sum_i X_i \right)\right]} \approx \prod_{k=1}^{t/\Delta} \left[ \prod_{j=1}^Q e^{-i \Delta \hat{H}_{\text{Ising}}^j(\boldsymbol{\phi})} \right]
\end{equation}
\normalsize
Thus, the final expression for the unitary operator to be implemented on quantum hardware is given by: $\hat{U}_{\mathrm{QGNN}}(\boldsymbol{\eta}, \boldsymbol{\phi}) = \prod_{p=1}^P \left[ \prod_{q=1}^Q e^{-i \eta_{pq} \hat{H}_q(\boldsymbol{\phi})} \right]$, where \( \eta_{pq} \) signifies a small parameter, \( Q = 3 \) indicates the total number of summations in the Hamiltonian~\ref{eq2}, and \( P \) refers to the count of layers in the QGNN.

This approach naturally leads to a permutation-EQGNN based on the Ising Hamiltonian. As previously discussed, the Lorentz group governs quantum field symmetries, which capture the invariance of physical laws across different inertial frames. Existing studies have proposed efficient methods to achieve equivariance in such systems. For example, {\cite{nguyen_theory_2024}} highlights that an equivariant model can be constructed using a set of quantum gates designed to preserve symmetry. However, these frameworks typically rely on the assumption that the underlying symmetry group is either discrete or a compact Lie group. Since the Lorentz group is noncompact, we employ an alternative approach rooted in the Invariant Theory {\cite{villar2023}}, diverging from the strategies outlined in {\cite{nguyen_theory_2024}} for achieving equivariance. Furthermore, our methodology for QGNNs is distinct from those introduced in {\cite{forestano_comparison_2024}}. The principles underlying our approach have been successfully applied in classical contexts, particularly in HEP {\cite{gong_efficient_2022}}, which is the primary inspiration for our work.

\subsection{Invariant-Equivariant Machine Learning}
In the context of Lie groups~\cite{georgi_lie_2019}, as applied in particle physics, our main objective is to explore how these symmetries can be incorporated into hybrid quantum-classical machine learning architectures. Mathematically, we treat the data as vectors \( x \) sampled from an input space \( X = \mathbb{R}^n \). A group element \( g \in G \) acts on \( x \) through a linear transformation \( T_X(g) \), where \( T_X: G \to \text{GL}(n) \) is the group representation. Essentially, \( T_X \) maps each element of the group \( G \) to an invertible matrix \( T_X(g) \in \mathbb{R}^{n \times n} \). Each representation of a Lie group also defines a corresponding representation in its Lie algebra.

The concept of symmetry-preserving learning is essential in machine learning, as it simplifies the learning task by reducing the effective hypothesis space. Models that incorporate geometric properties as an inductive bias are naturally more data-efficient, which can significantly reduce the need for data augmentation~\cite{lim_what_2022,satorras_en_2021,maron_invariant_2018,gong_efficient_2022,ecker_rotation-equivariant_2018,forestano_discovering_2023,forestano_accelerated_2023,forestano_deep_2023}. In the coming age of quantum machine learning (QML), where qubit resources are limited, such symmetry-preserving properties are even more critical~\cite{forestano_comparison_2024,dong_2_2024}.

Given a function \( f: \mathcal{X} \to \mathcal{Y} \) (where \( f \) could represent a neural network, \( \mathcal{X} \) the input space, and \( \mathcal{Y} \) the latent space), and a group \( G \) that acts on both \( \mathcal{X} \) and \( \mathcal{Y} \), the function \( f \) is said to exhibit equivariance if \( \forall g \in G, (x, y) \in \mathcal{D}, T_y (g) y = f(T_x (g) x) \). This relationship can be more compactly written as:
\begin{equation}
g \cdot y = f(g \cdot x)
\end{equation}
Invariance is a particular type of equivariance defined by the equation \( f(g \cdot x) = f(x) \), which indicates that \( T_y(g) = \mathbb{I} \); therefore, any group element acting on the space \( \mathcal{Y} \) results in the identity transformation. To incorporate invariance into a model, identifying the group \( G \) that encapsulates the transformations present in the data is crucial. In this context, Lie groups hold significant importance. We focused our analysis on the Lorentzian group, which plays a key role as a symmetry in the standard model of particle physics.

\begin{figure*}[!ht]
    \centering
    \includegraphics[width=1\linewidth]{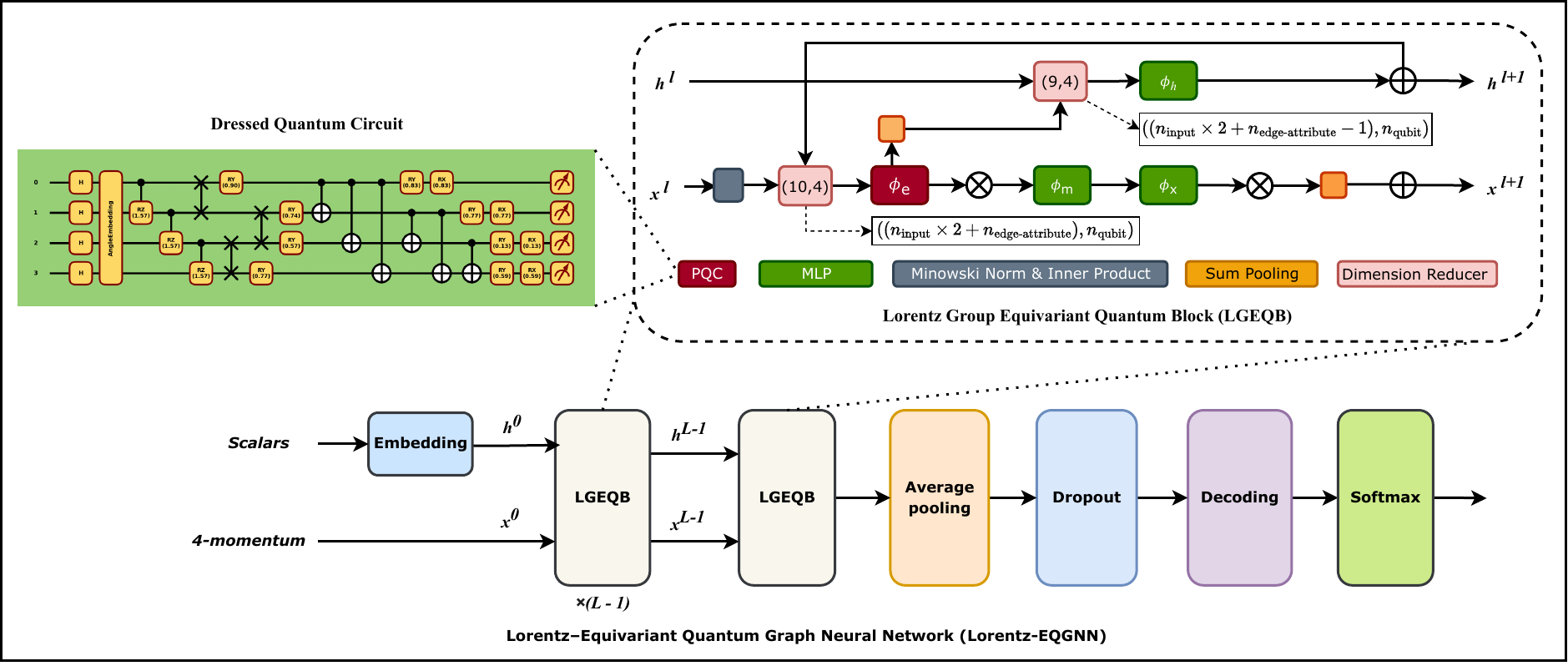}
    \caption{Proposed architecture of Lorentz-EQGNN. Within LGEQB, we replace standard $\phi$ modules with customized MLPs and PQCs. Here, only the MLP replacement in $\phi_e$ is shown. Dimension reducers are applied before $\phi_e$ and $\phi_m$ to ensure input compatibility across LGEQB, minimizing qubit requirement.}
    \label{fig:diagram}
\end{figure*}

\section{Methodology}\label{sec3}
\subsection{Equivariant Quantum Graph Neural Networks (EQGNN)}
QML offers a novel approach for incorporating equivariance in neural networks that leverage group symmetries. For a specific group \( \mathcal{G} \), equivariance~\cite{nguyen_theory_2024} can be represented through the construction of a quantum neural network defined as \( h_{\phi} = \text{Tr}[\sigma \tilde{M}_{\phi}] \), where \( h_{\phi} \) is the output of the quantum neural network and \( \sigma \) represents the input quantum state or density matrix from the input domain \( \mathcal{H} \). \( \tilde{M}_{\phi} \in Comm(G) = \{ B \in \mathbb{C}^{d \times d} \mid [B, R(g)] = 0 \text{ for all } g \in G \} \), where \( Comm(G) \) is the commutant of \( G \) and consists of matrices that commute with all representations \( R(g) \) of elements \( g \) $\in$ \( G \). This commutativity condition ensures that the model exhibits equivariance, leveraging the cyclical property of the trace operation. Specifically, for an element \( g \) in the group \( G \) and its representation \( R(g) \), we observe:\vskip-\parskip
\footnotesize
\begin{multline}
h_{\phi}(g \cdot \sigma) = \text{Tr}[R(g) \sigma R^{\dagger}(g) \tilde{M}_{\phi}] = \text{Tr}[\sigma R^{\dagger}(g) \tilde{M}_{\phi} R(g)] = \\
\text{Tr}[\sigma \tilde{M}_{\phi}] = h_{\phi}(\sigma)
\end{multline}\normalsize
where \( g \cdot \sigma \) represents the transformed input state defined by the adjoint action \( R(g) \sigma R^{\dagger}(g) \). \( \text{Tr}[\cdot] \) denotes the trace operation used for calculating the output. \( R(g) \) is a unitary representation of the group element \( g \) on the Hilbert space \( \mathcal{H} \), and \( R^{\dagger}(g) \) is its Hermitian conjugate.

In the Heisenberg context, where the evolution is applied to the measurement operator rather than the quantum state, invariance is achieved when the observable $\tilde{M}_{\phi}$ is part of the commutant of $\mathcal{G}$. However, this framework is restricted to finite-dimensional and compact groups such as $p4m$ and $SO(3)$. Continuous and non-compact groups, such as the Lorentz group, present challenges due to the absence of finite-dimensional unitary representations. We can integrate equivariance into the feature space and the message-passing mechanism to address this issue instead of embedding it directly into the ansatz. When the input remains invariant, the message-passing operation can naturally become equivariant. Equivariant GNNs (EGNNs) can also be extended to other significant groups, such as the Lorentz group, which plays a crucial role in HEP~{\cite{gong_efficient_2022}}. Particles detected at LHC move at speeds approaching that of light, requiring descriptions framed within the context of special relativity. Mathematically, the transformations between two inertial reference frames are described by the Lorentz group $O(1, 3)$.

\subsection{Proposed Lorentz-EQGNN}
Lorentz-EQGNN employs a hybrid quantum-classical architecture where classical components handle data preprocessing, feature extraction, and result aggregation, while quantum circuits focus on symmetry-adaptive computations using parameterized quantum layers. This division of tasks leverages classical efficiency for managing large-scale datasets and quantum expressiveness for preserving symmetries and learning geometric invariants. The classical components reduce the dimensionality of input features to ensure compatibility with quantum layers, minimizing the number of qubits required. Conversely, the quantum circuits perform operations like entanglement and parameterized rotations to exploit the full Hilbert space for feature representation. Ablation studies (Subsection {\ref{ablation}}) confirm that the hybrid architecture outperforms purely quantum or classical models, achieving a superior balance between computational efficiency and noise resilience, especially under data-scarce conditions. This synergy highlights the practical advantages of hybrid approaches in overcoming quantum hardware limitations while maintaining high performance.

Following the methodology of LorentzNet~{\cite{gong_efficient_2022}}, our input comprises 4-momentum vectors and related particle scalars (e.g., color and charge). We build upon the LorentzNet architecture while introducing three key modifications (see Fig.~{\ref{fig:diagram}}): (1) extracting the invariant metric from the learned algebra, (2) replacing classical MLP-based modules ($\phi_e$, $\phi_x$, $\phi_h$, and $\phi_m$) with PQCs, whose architecture is shown in Fig.~{\ref{fig:phi_e}}, and (3) introducing two dimension reducers preceding $\phi_e$ and $\phi_h$, optimizing qubit usage for efficient encoding of input data from classical to quantum domains. Table~{\ref{tab:1}} outlines module-specific adjustments, highlighting the classical-quantum layer replacements and indicating parameters configured for each quantum layer type.

\begin{table}[!ht]
\caption{Modifications of the LorentzNet modules present in the LGEQB block of Lorentz-EQGNN. Here, n\_hidden and n\_output are dimensions of latent space equal to 4}
\label{tab:1}
\resizebox{\columnwidth}{!}{%
\begin{tabular}{|l|l|l|}
\hline
\textbf{Module} & \textbf{LorentzNet}                       & \textbf{Lorentz-EQGNN}                                         \\ \hline
\multirow{5}{*}{$\phi_e$} & Linear(n\_hidden, n\_hidden, bias=False) & \multirow{5}{*}{DressedQuantumNet(n\_input)}        \\
         & Batch Normalization 1D           &                                                   \\
         & ReLU Activation                            &                                                   \\
         & Linear(n\_hidden, n\_hidden)     &                                                   \\
         & ReLU Activation                            &                                                   \\ \hline
\multirow{4}{*}{$\phi_h$} & Linear(n\_hidden, n\_hidden)             & \multirow{4}{*}{Dressed Quantum Circuit(n\_hidden)} \\
         & Batch Normalization 1D           &                                                   \\
         & ReLU Activation                            &                                                   \\
         & Linear(n\_hidden, n\_output)     &                                                   \\ \hline
\multirow{3}{*}{$\phi_m$}     & Linear(n\_hidden, 1)                     & Dressed Quantum Circuit(n\_hidden)                  \\
         & \multirow{2}{*}{Sigmoid Activation}         & Linear(n\_hidden, 1)                              \\
         &                                  & Sigmoid Activation                                          \\ \hline
\multirow{3}{*}{$\phi_x$}       & Linear(n\_hidden, n\_hidden)             & Dressed Quantum Circuit(n\_hidden)                  \\
         & ReLU Activation                            & \multirow{2}{*}{Linear(n\_hidden, 1, bias=False)} \\
         & Linear(n\_hidden, 1, bias=False) &                                                   \\ \hline
\end{tabular}%
}
\end{table}

\subsubsection{Input Layer}
The network receives the 4-momenta of particles from a collision event as input, which may also include associated scalars such as labels and charges. For the quark-gluon dataset, a jet can be represented as a graph by treating its constituent particles as nodes. For each particle \( j \), its 4-momentum vector \( p_{j,\alpha} = (E_{j,\alpha}, p_{j,\alpha}^x, p_{j,\alpha}^y, p_{j,\alpha}^z) \) defines the node’s coordinates in Minkowski space. Node attributes, such as mass, charge, and particle identity, are given by \( a_{j,\alpha} = (a_{j_1,\alpha}, a_{j_2,\alpha}, \dots, a_{j_n,\alpha}) \). Together, \( f_{j,\alpha} = p_{j,\alpha} \otimes a_{j,\alpha} \) encapsulates key tagging features. The scalars encompass the mass of the particle (calculated as \( (E_{j,\alpha})^2 - (p_{j,\alpha}^x)^2 - (p_{j,\alpha}^y)^2 - (p_{j,\alpha}^z)^2 \)) or, when available, particle identification (PID) information.

\begin{figure}[!ht]
    \centering
    \includegraphics[width=1\linewidth]{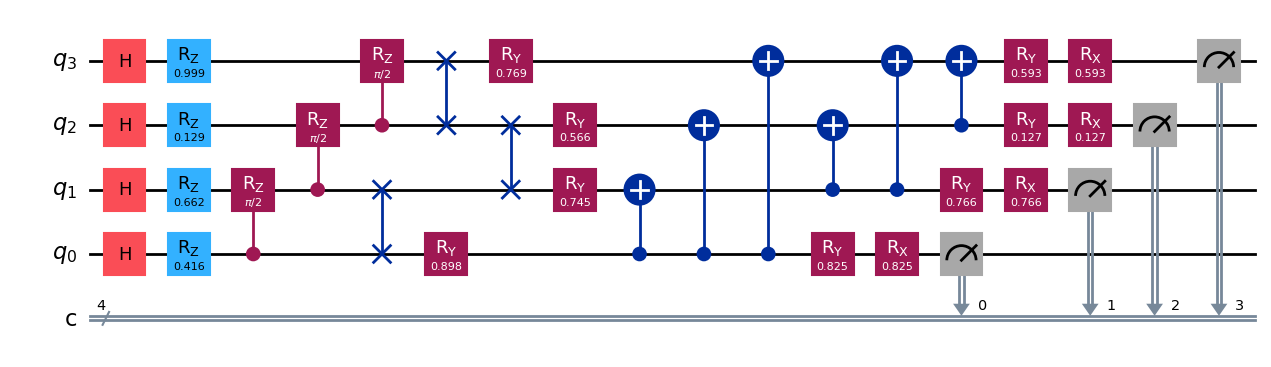}
    \caption{Dressed quantum circuit architecture for the LGEQB in the Lorentz-EQGNN framework. Here, the figure was generated for depth size = 2 to illustrate the varying entanglement operation in both even and odd depth sizes.}
    \label{fig:phi_e}
\end{figure}

\subsubsection{Dressed Quantum Circuit}
The architecture of the quantum circuit (see Fig. \ref{fig:phi_e} and Algorithm \ref{alg1}) implemented within the Lorentz-EQGNN comprises several strategically designed layers to enhance the model's expressiveness and representational capacity. The circuit initialization begins with a layer of Hadamard ($H$) gates applied to each of the \( n \) = 4 qubits, setting them into a superposition state. Mathematically, this transformation can be represented as:
\begin{equation}
H(q_i) = \frac{1}{\sqrt{2}} \begin{pmatrix} 1 & 1 \\ 1 & -1 \end{pmatrix}, \quad i \in \{0, \ldots, n-1\}
\end{equation}
This initialization creates an equal probability distribution of measurement outcomes, positioning the qubits at a balanced initial state crucial for subsequent operations.

Following the initialization, the circuit utilizes the \textit{angle embedding} technique to embed input features into the quantum state. This embedding is achieved through \( Z \)-axis rotations, defined by:
\begin{equation}
R_z(\psi_i) = \begin{pmatrix} e^{-i\psi_i / 2} & 0 \\ 0 & e^{i\psi_i / 2} \end{pmatrix}
\end{equation}
Here, \(\psi_i\) corresponds to feature values that are appropriately scaled to fit the range of the rotation. This step is critical for effectively encoding classical data into the quantum state.

The circuit then proceeds to apply alternating parameterized rotation layers to each qubit. The first type of rotation, the $RY$ layer, introduces parameterized \( Y \)-axis rotations, expressed as:
\begin{equation}
R_y(\gamma_i) = \begin{pmatrix} \cos(\gamma_i/2) & -\sin(\gamma_i/2) \\ \sin(\gamma_i/2) & \cos(\gamma_i/2) \end{pmatrix}
\end{equation}
This equation, \(\gamma_i\), represents trainable parameters that allow the circuit to learn from data adaptively. The second type of rotation, the Combined $RY-RX$ layer, applies both \( Y \)- and \( X \)-axis rotations on each qubit using shared weights. This can be expressed as:

\tiny
\begin{equation}
R_y(\gamma_i) \cdot R_x(\gamma_i) = \\
\begin{pmatrix} \cos(\gamma_i/2) & -\sin(\gamma_i/2) \\ \sin(\gamma_i/2) & \cos(\gamma_i/2) \end{pmatrix} \cdot \begin{pmatrix} \cos(\gamma_i/2) & -i\sin(\gamma_i/2) \\ -i\sin(\gamma_i/2) & \cos(\gamma_i/2) \end{pmatrix}
\end{equation}\normalsize

The circuit incorporates two distinct entangling layers to enhance the entanglement between qubits further. The first, termed the \textit{full entangling layer}, applies controlled-NOT (CNOT) gates between all pairs of qubits \( (q_i, q_j) \), represented mathematically as:
\begin{equation}
\text{CNOT}(q_i, q_j) = \begin{pmatrix} 1 & 0 & 0 & 0 \\ 0 & 1 & 0 & 0 \\ 0 & 0 & 0 & 1 \\ 0 & 0 & 1 & 0 \end{pmatrix}
\end{equation}
This layer ensures complete entanglement across all qubits, which is essential for quantum circuits to leverage the full potential of quantum parallelism.

The second entangling layer, referred to as the \textit{shifted entangling layer}, employs Controlled-RZ ($CRZ$) gates with an angle of \(\pi / 2\) and SWAP gates to facilitate interactions among qubits. The $CRZ$ gates are expressed as:
\begin{equation}
\text{CRZ}(\pi / 2, q_i, q_{i+1}) = \begin{pmatrix} 1 & 0 & 0 & 0 \\ 0 & e^{-i\pi / 4} & 0 & 0 \\ 0 & 0 & e^{i\pi / 4} & 0 \\ 0 & 0 & 0 & 1 \end{pmatrix}
\end{equation}
These gates are followed by $SWAP$ gates, which exchange the states of adjacent qubits, thereby cycling entanglement across the register and enriching the quantum state representation.

Finally, the circuit concludes with measurement operations that compute the expectation values in the \( Z \)-basis for each qubit, yielding results in the range \([-1, 1]\). This measurement is represented as: $\langle Z \rangle_{q_i} = \langle \psi | Z_i | \psi \rangle$. These expectation values constitute the quantum output features, which are subsequently processed in the classical layers of the Lorentz-EQGNN.

\begin{algorithm}[!ht]
\caption{Dressed Quantum Circuit}
\label{alg1}
\begin{algorithmic}[1]
\Require $input\_features$: Batch of input features, $(batch\_size, input\_dim)$
\Require $n\_qubits$: Number of qubits
\Require $q\_depth$: Circuit depth
\Require $q\_delta$: Initial scaling factor for parameters
\Ensure $q\_out$: Z-basis expectation values, $(batch\_size, n\_qubits)$

\State Initialize: Quantum device with $n\_qubits$, parameters $q\_params \sim \mathcal{N}(0, q\_delta)$
\State Preprocess: Transform $input\_features$ to $q\_in = \tanh(input\_features) \cdot \frac{\pi}{2}$
\For{$i = 1$ \textbf{to} $batch\_size$}
    \State Reshape $q\_params$ to $(q\_depth, n\_qubits)$
    \State Apply $H$ gates on all qubits
    \State Embed $q\_in[i]$ using $Z$-rotation
    \For{$k = 1$ \textbf{to} $q\_depth$}
        \If{$k$ \textbf{is even}}
            \State Apply entangling layer
            \State Apply $RY$ rotations with $q\_weights[k]$
        \Else
            \State Apply full entangling layer
            \State Apply $RY$ and $RX$ rotations with $q\_weights[k]$
        \EndIf
    \EndFor
    \State Measure $\langle Z \rangle$ for each qubit, store in $q\_out[i]$
\EndFor
\Return $q\_out$
\end{algorithmic}
\end{algorithm}

\subsubsection{Lorentz Group Equivariant Quantum Block (LGEQB)}
At the heart of our model is the Lorentz Group Equivariant Quantum Block (LGEQB), which is designed to learn deeper quantum representations of states \(|\psi_{x}^{l+1}\rangle\) and \(|\psi_h^{l+1}\rangle\) based on their respective inputs \(|\psi_{x}^{l}\rangle\) and \(|\psi_{h}^{l}\rangle\):
\begin{equation}
|\psi_{x}^{l+1}\rangle = \mathcal{U}_{x^{l+1}}({x}^{l})|0\rangle, \quad |\psi_{h}^{l+1}\rangle = \mathcal{U}_{h^{l+1}}({h}^{l})|0\rangle
\end{equation}
where \(\mathcal{U}_{x^{l}}, \mathcal{U}_{h^{l}}\) are parameterized quantum gate unitaries or variational circuits. Initially, \(x^{l}\) represents the particle observables, and \(h^{l}\) denotes the scalar values. For subsequent layers (\(l > 0\)), these values are derived from the expectation values \(\langle \psi_x | \mathcal{M} | \psi_x \rangle\) or \(\langle \psi_h | \mathcal{M} | \psi_h \rangle\), where \(\mathcal{M}\) is a quantum measurement operator.

We represent the node embedding scalars as \( h^l = (h_1^l, h_2^l, \ldots, h_N^l) \) and the coordinate embedding vectors as \( x^l = (x_1^l, x_2^l, \ldots, x_N^l) \) in the \( l \)-th LGEQB layer. At layer \( l = 0 \), \( x_i^0 \) corresponds to the input 4-momenta \( v_i \) and \( h_i^0 \) signifies the scalar variable embeddings \( s_i \). The purpose of the LGEQB is to derive deeper embeddings \( h^{l+1} \) and \( x^{l+1} \) from the existing embeddings \( h^l \) and \( x^l \). The edge message passing between particle \(i\) and \(j\) in the Lorentz-EQGNN can be formulated as:
\begin{equation}
m_{ij}^l = \phi_e\left(h_i^l, h_j^l, \psi_n(\|x_i^l - x_j^l\|^2), \psi_n(\langle x_i^l, x_j^l \rangle)\right)
\end{equation}
where \( \phi_e(\cdot) \) is a neural network function that encodes the interaction information. The normalization function is defined as \( \psi_n(z) = \text{sgn}(z) \log(|z| + 1) \) to stabilize large inputs from wide-ranging distributions. This formulation includes the Minkowski dot product \( \langle x_i^l, x_j^l \rangle \) and the squared norm \( \|x_i^l - x_j^l\|^2 \) as key features for particle interaction. Next, the Minkowski dot product attention updates the coordinate embeddings as follows:
\begin{equation}\label{weight_decay}
x_i^{l+1} = x_i^l + c \sum_{j \in [N]} \phi_x(m_{ij}^l) \cdot x_j^l
\end{equation}
where \( \phi_x(\cdot) \in \mathbb{R} \) is a scalar function from neural networks, and \( c \) is a hyperparameter controlling the scale of \( x_i^{l+1} \) to prevent explosive growth. For the node embeddings, we have:
\begin{equation}
h_i^{l+1} = h_i^l + \phi_h\left(h_i^l, \sum_{j \in [N]} w_{ij} m_{ij}^l\right)
\end{equation}
where \( \phi_h(\cdot) \) is modeled by a neural network, with its output dimension matching that of \( h_{i}^{l+1} \). Additionally, we introduced a neural network \( \phi_m(\cdot) \) to learn the edge significance \( w_{ij} = \phi_m(m_{ij}^l) \in [0, 1] \), ensuring permutation invariance and facilitating handling varying numbers of nodes.

In the LGEB architecture, $n_{input}$ specifies the dimensionality of the input scalar embeddings, while $n_{hidden}$ defines the dimension of the latent space used to process the embeddings through successive layers. $(n_{input} \times 2 + n_{edge\_attribute})$ refers to the dimensionality of concatenated features used in the edge-wise message-passing mechanism, where $n_{edge\_attribute}$ denotes the additional features derived from Minkowski geometry (norms and dot products between coordinate embeddings of nodes) and potentially other attributes. By default, $n_{edge\_attribute}$ is set to 2 unless 10-dimensional features are included. We introduced two MLP-based dimension reducers to efficiently preprocess data and minimize qubit requirements for quantum layers. The first reducer processes $(n_{input} \times 2 + n_{edge\_attribute})$ features, where $(n_{input} \times 2)$ represents the concatenated scalar embeddings of two nodes ($h_i, h_j$), and $n_{edge\_attribute}$ includes relational features like the Minkowski norm ($\|x_i - x_j\|^2$) and the inner product ($\langle x_i, x_j \rangle$). It reduces this input to $n_{qubit}$, optimizing data for $\phi_e$. The second reducer processes $(n_{input} \times 2 + n_{edge\_attribute} - 1)$, omitting one edge feature, and reduces it to $n_{qubit}$ for $\phi_h$. The $n_{hidden}$ parameter is reused across the node update network $(\phi_h)$ and the scalar-message weighting network $(\phi_m)$. Furthermore, the dropout rate, $c_{weight}$ control regularization, and the strength of coordinate updates. The model's multi-layer structure allows iterative refinement of node embeddings and coordinates, culminating in graph-level predictions through pooling and classification layers. Although both $x_i^{l}$ and $h_i^{l}$ are updated through the layers, the final output only uses the scalar features $h_i^{L}$ from the last layer $L$. This strategy reduces redundancy and computational overhead because the $m_{ij}^{l}$ already incorporate information from both $x_i^{l}$ and $x_j^{l}$ and focusing on scalar features, simplifies the network output without losing critical information.

\subsubsection{Infrared and Collinear Safety}
Integrating infrared and collinear (IRC) safety guarantees that observables stay consistent when soft or collinear particles are present. As suggested in~\cite{nguyen_theory_2024}, an IRC-safe equivariant classical GNN outperforms conventional methods in tagging semi-visible jets from Hidden Valley models. The model must be invariant to infinitesimal or collinear emissions to achieve IRC safety. This can be ensured through message passing, such that:
\begin{multline}
\text{IR safety}: m^{l}(i,j) \rightarrow 0 \quad \text{as} \quad z \rightarrow 0, \text{and}\\
\text{C safety}: m^{l}(i,j + r) = m^{l}(i,j) + m^{l}(i,r) \quad \text{as} \quad \Delta_{jr} \rightarrow 0   
\end{multline}
\(z\) represents the energy of the emitted particle, \(r\) is a neighboring particle to \(j\), \(\Delta_{jr}\) is the distance or angular separation between particles \(j\) and \(r\). To maintain IR safety without compromising equivariance by \(z_j\), we modify the message function as follows:
\begin{equation}
m_{ij}^{l} = \frac{\langle x_i, x_j \rangle}{\sum_{k \in \mathcal{N}(j)} \langle x_i, x_k \rangle} \cdot \phi_e(h_i, h_j, \psi_n(\|x_{i}^{l} - x_{j}^{l}\|^2), \psi_n(\langle x_{i}, x_{j} \rangle))
\end{equation}
where $\mathcal{N}(j)$ denotes the neighboring particles of $j$, $\psi_n$ acts as a normalization factor designed to enhance training stability in the case of large distributions {\cite{gong_efficient_2022}}. The Minkowski inner product ensures that soft particles exert negligible influence, thereby preserving IR safety. Furthermore, this inner product remains invariant under Lorentz transformations, ensuring symmetry-preserving message propagation.

\subsubsection{Decoding layer}
After passing through \(L\) layers of LGEQB, the node embeddings \( h^L \) are decoded. Since \( h^L \) already includes prior information, we avoid redundancy by decoding only \( h^L \). We use average pooling to obtain \( h_{\text{avg}} = \frac{1}{N} \sum_{i \in [N]} h_i^L \), apply dropout to reduce overfitting, and then pass it through a two-layer decoding block with softmax for binary classification.

\subsubsection{Theoretical Analysis}
We now establish the theoretical foundation of our model.
\begin{proposition}
The coordinate embedding \(x^{l} = \{x_1^{l}, x_2^{l}, \dots, x_n^{l}\}\) exhibits Lie group equivariance, whereas the node embedding \(h^{l} = \{h_1^{l}, h_2^{l}, \dots, h_n^{l}\}\), which signifies the particle scalars, maintains Lie group invariance.
\end{proposition}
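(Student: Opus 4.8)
The plan is to proceed by induction on the layer index $l$, proving both claims simultaneously since they are coupled through the message-passing updates. The inductive hypothesis fixes an arbitrary Lorentz transformation $\Lambda \in O(1,3)$ and asserts that the coordinate embeddings transform covariantly, $x_i^l \mapsto \Lambda x_i^l$ (equivariance), while the scalar embeddings are unchanged, $h_i^l \mapsto h_i^l$ (invariance). For the base case $l = 0$, I would invoke the input-layer definitions of the preceding subsection: $x_i^0 = v_i$ are the input 4-momenta, which transform as genuine Minkowski 4-vectors under $\Lambda$ and are therefore equivariant, whereas $h_i^0 = s_i$ are the particle scalars (mass, charge, PID), which are Lorentz-invariant by construction.

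The central step is to establish that the edge messages $m_{ij}^l$ are Lorentz-invariant. This holds because $m_{ij}^l = \phi_e(h_i^l, h_j^l, \psi_n(\|x_i^l - x_j^l\|^2), \psi_n(\langle x_i^l, x_j^l\rangle))$ depends on its coordinate arguments only through the Minkowski squared norm and the Minkowski inner product. Using the defining invariances $\langle \Lambda x_i^l, \Lambda x_j^l\rangle = \langle x_i^l, x_j^l\rangle$ and $\|\Lambda(x_i^l - x_j^l)\|^2 = \|x_i^l - x_j^l\|^2$, together with the inductive invariance of $h_i^l$ and $h_j^l$, every argument fed to $\phi_e$ is unchanged under $\Lambda$; hence $m_{ij}^l$ is invariant. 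Crucially, this conclusion is independent of whether $\phi_e$ is realized as an MLP or as the dressed quantum circuit, because the $\phi$ maps act only on these already-invariant scalar inputs. The same argument shows that the scalar quantity $\phi_x(m_{ij}^l)$ and the edge weight $w_{ij} = \phi_m(m_{ij}^l) \in [0,1]$ are both invariant.

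Finally, I would close the induction. Substituting $x_i^l \mapsto \Lambda x_i^l$ into the coordinate update $x_i^{l+1} = x_i^l + c \sum_{j \in [N]} \phi_x(m_{ij}^l)\, x_j^l$ and using linearity of $\Lambda$ together with the invariance of the scalar coefficients $\phi_x(m_{ij}^l)$, the transformation $\Lambda$ factors out of the entire sum, yielding $x_i^{l+1} \mapsto \Lambda x_i^{l+1}$, which is the required equivariance. For the node update $h_i^{l+1} = h_i^l + \phi_h(h_i^l, \sum_{j \in [N]} w_{ij} m_{ij}^l)$, every argument is an already-invariant quantity, so the output is invariant by composition. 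I expect the algebra to be routine; the one point requiring genuine care is verifying that the quantum $\phi$ modules receive only the invariant scalar features (ensured by the dimension-reducer preprocessing that precedes $\phi_e$ and $\phi_h$), so that replacing the classical MLPs with PQCs does not silently reintroduce dependence on the raw 4-vectors and thereby break the invariance on which the whole inductive argument rests.
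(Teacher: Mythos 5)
Your proposal is correct and follows essentially the same route as the paper's proof: establish Lorentz invariance of the messages $m_{ij}^{l}$ from the invariance of the Minkowski norm and inner product (plus the invariance of the scalar inputs $h_i^{l}$), let $Q$ factor out of the linear coordinate update to get equivariance of $x_i^{l+1}$, and propagate invariance of $h_i^{l+1}$ recursively. Your write-up is somewhat tighter than the paper's — you make the induction and its base case explicit and add the (correct, worthwhile) observation that the argument is agnostic to whether the $\phi$ maps are MLPs or PQCs since they only ever see invariant scalars — but these are refinements of the same argument rather than a different approach.
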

\begin{proof}
To demonstrate this, let \(Q\) denote a Lie group transformation. If the message \(m_{ij}^{l}\) is invariant under the action of \(Q\) for all \(i,j,l\), then \(x_{i}^{l}\) is naturally Lorentz group equivariant, as shown by the following equation:\vskip-\parskip\footnotesize
\begin{multline} 
Q \cdot x_i^{l+1} = Q\left(x_i^{l} + \sum_{j \in \mathcal{N}(i)} x_j^{l} \cdot \phi_x(m_{ij}^{l})\right) = \\
Q \cdot x_i^{l} + \sum_{j \in \mathcal{N}(i)} Q \cdot x_j^{l} \cdot \phi_x(m_{ij}^{l})
\end{multline}\normalsize
Here, \(Q\) acts through matrix multiplication, implying that applying \(Q\) externally is equivalent to applying it internally to the node embeddings from the previous layer. The invariance of \(m_{ij}^{l}\) ensures that the induced metric and inner product are also invariant under \(Q\), leading to:
\begin{multline} 
\|x_{i}^{0} - x_{j}^{0}\|^2 = \|Q \cdot x_{i}^{0} - Q \cdot x_{j}^{0}\|^2, \quad \langle x_{i}^{0}, x_{j}^{0} \rangle = \langle Q \cdot x_{i}^{0}, Q \cdot x_{j}^{0} \rangle
\end{multline}
Since \(m_{ij}^{l+1} = \phi_e(h_i^{l}, h_j^{l}, \|x_{i}^{l} - x_{j}^{l}\|^2, \langle x_{i}^{l}, x_{j}^{l} \rangle)\), the invariance of the norm and inner product ensures that the message passing mechanism remains symmetry-preserving. Furthermore, since \(h_i^{l+1} = h_i^{l} + \phi_h\left(h_i^{l}, \sum_{j \in \mathcal{N}(i)} w_{ij} m_{ij}^{l}\right)\), the invariance of \(h_i^{l+1}\) is maintained recursively across layers.
\end{proof}

\subsubsection{Training and Optimization Procedure}
We implemented 5-fold cross-validation for model training, employing the AdamW optimizer with a weight decay ($c$ in Eq.~\ref{weight_decay}) of 0.01 to reduce the cross-entropy loss. Training the Lorentz-EQGNN involves 35 epochs, beginning with a linear warm-up period of 5 epochs to obtain an initial learning rate of \(1 \times 10^{-3}\). After this, a cosine annealing learning rate schedule is implemented for the subsequent 50 epochs, with parameters \(T_0 = 4\) and \(T_{\text{mult}} = 2\). The final three epochs are subjected to a weight decay of \(\gamma = 1 \times 10^{-2}\). Quantum circuit training faces challenges such as vanishing gradients (barren plateaus), parameter initialization, and hardware noise. To address these, parameters are initialized near zero with a small scaling factor $(q\_delta)$ to prevent barren plateaus, while gradients are computed efficiently using the PyTorch-Pennylane interface and parameter-shift rules. Hyperparameter tuning is performed via grid search, optimizing circuit depth $(q\_depth)$ for a balance between expressiveness and computational cost, rotation strategies for improved feature representation, and entanglement methods to maximize qubit interaction. Batch-wise quantum execution is vectorized to minimize runtime overhead while maintaining scalability. After every training epoch, the model is evaluated on the test dataset, and the version that demonstrates the highest validation accuracy is chosen as the best model for the final testing stage. 

\subsection{Baseline Models}
For benchmarking, we implemented a classical CNN, 3 quantum models (EQCNN, reflection EQNN, and $p4m$ EQNN), and 2 hybrid models (QNN and Hybrid EQCNN).

\subsubsection{Classical CNN}
We employed a conventional convolutional neural network (CNN) model that processes input images of dimensions \(16 \times 16\) with a single channel (grayscale). Filters of dimensions 32 and 64 with a size of \(3 \times 3\) are employed and activated by ReLU, and max pooling is applied to reduce the output dimensions. The feature map is then flattened and input into a fully connected \textit{Dense} layer with 128 units, followed by a \textit{Dropout} layer with a dropout rate of 0.5. The quark-gluon and electron-photon datasets include a final \textit{Dense} layer with a single unit and sigmoid activation. The Adam optimizer and binary cross-entropy loss function are used by the model.

We constructed a steerable CNN model for the MNIST and Fashion-MNIST datasets, exhibiting equivariance to the cyclic group \(C_8\) to manage rotations. The process begins with grayscale images as scalar fields, employing group-equivariant convolutions on regular fields across the network. After each convolution, antialiased pooling is implemented to gradually diminish spatial dimensions, followed by group pooling to extract rotation-invariant characteristics. The ultimate classification is performed with two fully connected layers. The cross-entropy loss is utilized by the model, and the Adam optimizer is employed, with a learning rate of \(5 \times 10^{-5}\) and a weight decay of \(1 \times 10^{-5}\).

\subsubsection{Fully Quantum EQCNN}
Equivariant quantum CNN (EQCNN) embeds input data into an 8-qubit quantum system using an \textit{equivariant-amplitude} embedding function. The embedding maps classical data \(x_i\) to quantum states \(\lvert \psi(x_i) \rangle\), ensuring data symmetries are preserved:
\begin{equation}
\lvert \psi(x_i) \rangle = U_{\text{embedding}}(x_i) \lvert 0 \rangle^{\otimes n}
\end{equation}
Each convolutional layer employs a unitary operation \(U_2\), parameterized as: $U_2(\psi) = e^{\left(i \sum_j \psi_j P_j\right)}$, where \(\psi_j\) are the learnable parameters and \(P_j\) are Pauli operators acting on qubits. For each layer, 6 parameters are used, optimizing over multiple qubits and pairs. The pooling layers perform qubit reduction, defined as: $\text{Pooling}(q_1, q_2) = U_{\text{pool}} \lvert q_1 q_2 \rangle \rightarrow \lvert q_1 \rangle$. The network's structure comprises 5 layers, alternating convolutional and pooling layers. The quantum circuit outputs are measured using $PauliZ$ operators for qubits 4 through 7. The model is trained using the mean squared error (MSE) loss function. Training is performed over 25 epochs on 2000 samples using Nesterov optimization with an initial learning rate of 0.5, decaying by 0.5 every 10 steps.

\subsubsection{Hybrid QNN}
The hybrid quantum neural network (QNN) consists of a quantum node integrated with classical layers. The model accepts input images, which are reshaped into a one-dimensional array. The quantum layer utilizes Amplitude Embedding to encode the input data into 8 qubits, followed by \textit{basic entangler layers} that apply entanglement operations. The quantum output is generated as expectation values from $Pauli-Z$ measurements of each qubit. This output is then processed by two classical linear layers: the first maps the quantum output to a hidden layer of 256 units, and the second reduces it to two output classes. A Softmax activation layer is applied to produce probabilistic outputs for classification. The model is optimized using stochastic gradient descent (SGD) with an $L1$ loss function.

\subsubsection{Hybrid EQCNN}
The hybrid EQCNN integrates an equivariant quantum circuit followed by a classical layer. We used an 8-qubit quantum circuit for the quantum convolutional layer, where classical data is encoded using Amplitude Embedding. The quantum operations involve parameterized rotations and entangling gates like $IsingZZ$ and $IsingYY$ ($U2$ block and Pooling Ansatz), applied across qubit pairs to capture quantum correlations. After the quantum layer, measurements are taken from specific qubits, with the resulting probabilities passed to a classical layer with $16 \times 2$ units and a softmax activation, allowing the model to output a final prediction.

\subsubsection{Reflection EQNN}
The reflection equivariant quantum neural network (REQNN) \cite{t_west_reflection_2023} leverages geometric quantum machine learning techniques to respect reflection symmetries in image data. Classical images \( x \in \mathcal{X} \) are encoded into a quantum state \( |\phi(x)\rangle \) through amplitude encoding. The quantum state evolves through a parameterized variational circuit \( U_{\phi} \), ensuring the model's equivariance under reflection symmetry. The action of the symmetry group \( \mathcal{G} \) on the encoded states is given by:
\begin{equation}
R_{g} |\phi(x)\rangle = |\phi(g(x))\rangle, \quad \forall g \in \mathcal{G}, x \in \mathcal{X}
\end{equation}
Predictions \( \hat{y}_{\phi}(x) \) are computed based on quantum measurements:
\begin{equation}
\hat{y}_{\phi}(x) = \arg\max_{j} \langle \phi(x) | U_{\phi}^{\dagger} M_j U_{\phi} | \phi(x) \rangle
\end{equation}
where \( M_j \) is the measurement observable associated with class \( j \). The model's predictions remain invariant under the action of the reflection symmetry group \( \mathcal{G} \).

\subsubsection{p4m EQNN}
The p4m EQNN \cite{chang_approximately_2023} model ensures rotational and reflectional invariance for $16\times16$ pixel images. Each image is scaled to \([0, 1]\) and transformed with a sine function to map pixel values into \([-1, 1]\), then normalized and embedded into a quantum state using Amplitude Embedding. The circuit comprises translationally invariant $U2$ and $U4$ gates, applied across qubit pairs and quartets to capture complex entanglements while preserving symmetry. A pooling ansatz layer further enforces symmetry by interacting with qubit pairs. In the 4-measurement mode, $H$ gates measure probabilities on selected qubits; in the 2-measurement mode, additional $U4$ and pooling ansatz layers enhance reflectional symmetry before measurement. This setup ensures the model's invariance to rotations and reflections, which is ideal for tasks requiring p4m-equivariance.

\begin{table*}[!ht]
\caption{Performance Comparison on Quark-Gluon Dataset}
\label{tab:qg}
\resizebox{\textwidth}{!}{%
\begin{tabular}{l c c c c c c c c c c c }
\hline
\textbf{Model} &
   \textbf{\# Qubits} &\textbf{Optimizer} &
  \textbf{Learning rate} &
  \textbf{Batch size} &
  \textbf{Epoch} &
  \textbf{Loss function} &
  \textbf{Training subset size} &
  \textbf{Training time (s)} &
  \textbf{Inference time (s)} &
  \textbf{Train accuracy} &
  \textbf{Test accuracy} \\ \hline
Classical CNN    &  -&Adam & $1\times10^{-3}$ & 32  & 50 & Binary Crossentropy & 8000 (80\%)   & $\approx151.43$ & $\approx0.47$   & 91.09\%$\pm$3.37\% & 57.38\%$\pm$3.12\% \\ 
Hybrid EQCNN     &  8&SGD  & $1\times10^{-1}$      & 64  & 20 & L1 Loss             & 10000 (6.2\%) & $\approx576.28$  & $\approx2.30$    & 49.82\%$\pm$2.99\% & 48.70\%$\pm$1.85\% \\ 
Hybrid QNN       &  8&SGD  & $1\times10^{-1}$      & 64  & 50 & L1 Loss             & 10000 (10\%)  & $\approx1616.88$ & $\approx6.92$   & 60.00\%$\pm$1.92\% & \textbf{60.00\%$\pm$1.55\%} \\ 
EQCNN            &  8&Adam & $1\times10^-3$ & 128 & 10 & MSE                 & 20000 samples & $\approx706.08$  & $\approx325.78$ & 50.00\%$\pm$1.74\% & 50.00\%$\pm$1.53\% \\ 
Reflection-EQNN &  8&Adam & $1\times10^{-2}$     & 50  & 50 & Map Loss            & 500 (full)    & $\approx1057.36$ & $\approx0.02$   & 53.00\%$\pm$3.13\% & 51.00\%$\pm$4.21\% \\ 
p4m-EQNN        &  8&Adam & $1\times10^{-2}$ & 50  & 50 & Map Loss            & 500 (subset)  & $\approx1622.59$ & $\approx0.03$   & 53.00\%$\pm$4.21\% & 51.00\%$\pm$4.72\% \\ 
\textbf{Lorentz-EQGNN}        &  \textbf{4} &AdamW & $1\times10^{-3}$     & 16 & 50 & Cross Entropy Loss           & \textbf{800 (subset)}  & $\approx499.89$ &  $\approx38.94$  & 75.12\%$\pm$0.89\% & \textbf{74.00\%$\pm$0.26\%} \\ \hline
\end{tabular}%
}
\end{table*}

\begin{table*}[!ht]
\caption{Performance Comparison on Electron-Photon Dataset}
\label{tab:ep}
\resizebox{\textwidth}{!}{%
\begin{tabular}{ l c c c c c c c c c c c }
\hline
\textbf{Model} &
   \textbf{\# Qubits}&\textbf{Optimizer} &
  \textbf{Learning rate} &
  \textbf{Batch size} &
  \textbf{Epoch} &
  \textbf{Loss function} &
  \textbf{Training subset size} &
  \textbf{Training time (s)} &
  \textbf{Inference time (s)} &
  \textbf{Train accuracy} &
  \textbf{Test accuracy} \\ \hline
Classical CNN    &  -&Adam & $1\times10^{-3}$ & 32  & 50 & Binary Crossentropy & 160000 (80\%) & $\approx832.22$  & $\approx2.90$    & 70.43\%$\pm$4.97\% & \textbf{70.28\%$\pm$3.25\%} \\ 
Hybrid EQCNN     &  8&SGD  & $1\times10^{-1}$      & 64  & 20 & L1 Loss             & 10000 (6.2\%) & $\approx577.01$  & $\approx2.46$   & 49.81\%$\pm$2.71\% & 48.70\%$\pm$2.89\% \\ 
Hybrid QNN       &  8&SGD  & $1\times10^{-1}$      & 64  & 50 & L1 Loss             & 20000 (10\%)  & $\approx1516.18$ & $\approx7.22$   & 61.00\%$\pm$2.23\% & 61.00\%$\pm$2.52\% \\ 
EQCNN            &  8&Adam & $1\times10^{-3}$ & 128 & 10 & MSE                 & 20000 samples & $\approx477.29$  & $\approx213.45$ & 48.00\%$\pm$2.76\% & 49.00\%$\pm$2.66\% \\ 
Reflection-EQNN &  8&Adam & $1\times10^{-2}$ & 50  & 50 & Map Loss            & 500 (full)    & $\approx983.74$  & $\approx0.02$   & 53.00\%$\pm$3.21\% & 51.00\%$\pm$4.09\% \\ 
p4m-EQNN        &  8&Adam & $1\times10^{-2}$ & 50  & 50 & Map Loss            & 500 (subset)  & $\approx1266.69$ & $\approx0.03$   & 53.00\%$\pm$2.47\% & 51.00\%$\pm$3.12\% \\ 
\textbf{Lorentz-EQGNN}        &  \textbf{4} &AdamW & $1\times10^{-3}$     & 16 & 50 & Cross Entropy Loss           & \textbf{800 (subset)}  & $\approx263.38$  &  $\approx20.8$  & 49.75\%$\pm$7.13\% & \textbf{67.00\%$\pm$5.91\%} \\ \hline
\end{tabular}%
}
\end{table*}

\begin{table*}[!ht]
\caption{Performance Comparison on MNIST Dataset}
\label{tab:mnist}
\resizebox{\textwidth}{!}{%
\begin{tabular}{ l c c c c c c c c c c c }
\hline
\textbf{Model} &
   \textbf{\# Qubits} & \textbf{Optimizer} &
  \textbf{Learning rate} &
  \textbf{Batch size} &
  \textbf{Epoch} &
  \textbf{Loss function} &
  \textbf{Training subset size} &
  \textbf{Training time (s)} &
  \textbf{Inference time (s)} &
  \textbf{Train accuracy} &
  \textbf{Test accuracy} \\ \hline
Classical CNN    &  -&Adam & $5\times10^{-5}$ & 64  & 10 & Cross Entropy Loss & 512 (10\%)    & $\approx106.82$ & $\approx8.01$   & 100.00\%$\pm$0.37\% & \textbf{99.86\%$\pm$0.44\%}  \\ 
Hybrid EQCNN     &  8&SGD  & $1\times10^{-1}$ & 64  & 20 & L1 Loss            & 10000 (10\%)  & $\approx623.35$ & $\approx8.05$   & 99.00\%$\pm$0.13\%  & 99.00\%$\pm$0.16\%  \\ 
Hybrid QNN       &  8&SGD  & $1\times10^{-1}$      & 64  & 20 & L1 Loss            & 10000 (10\%)  & $\approx657.69$ & $\approx9.89$   & 100.00\%$\pm$0.30\% & \textbf{100.00\%$\pm$0.17\%} \\ 
EQCNN            &  8&Adam & $1\times10^{-2}$ & 128 & 50 & MSE                & 12665 samples & $\approx3128.8$ & $\approx119.49$ & 90.00\%$\pm$1.14\%  & 91.00\%$\pm$1.78\%  \\ 
Reflection-EQNN &  8&Adam & $1\times10^{-2}$     & 100 & 10 & Map Loss           & 2000 (full)   & $\approx999.07$ & $\approx0.10$   & 92.00\%$\pm$0.63\%  & 92.00\%$\pm$0.47\%  \\ 
p4m-EQNN        &  8&Adam & $1\times10^-{2}$     & 50  & 10 & Map Loss           & 500 (subset)  & $\approx253.1$  & $\approx0.04$   & 93.00\%$\pm$0.68\%  & 86.00\%$\pm$1.74\%  \\ 
\textbf{Lorentz-EQGNN}        &  \textbf{4}  &  AdamW & $1\times10^{-3}$     & 32 & 50 & Cross Entropy Loss           & \textbf{800 (subset)}  & $\approx1349.53$ &  $\approx121.76$  & 85.40\%$\pm$2.28\% & 88.10\%$\pm$3.02\% \\ \hline
\end{tabular}%
}
\end{table*}

\begin{table*}[!ht]
\caption{Performance Comparison on FashionMNIST Dataset}
\label{tab:fashion-mnist}
\resizebox{\textwidth}{!}{%
\begin{tabular}{ l c c c c c c c c c c c }
\hline
\textbf{Model} &
\textbf{\# Qubits} & \textbf{Optimizer} &
  \textbf{Learning rate} &
  \textbf{Batch size} &
  \textbf{Epoch} &
  \textbf{Loss function} &
  \textbf{Training subset size} &
  \textbf{Training time (s)} &
  \textbf{Inference time (s)} &
  \textbf{Train accuracy} &
  \textbf{Test accuracy} \\ \hline
Classical CNN    &  -&Adam & $5\times10^{-5}$ & 64 & 10 & Cross Entropy Loss & 1200 (10\%)   & $\approx82.21$   & $\approx1.49$   & 99.25\%$\pm$0.36\% & \textbf{98.50\%$\pm$0.36\%} \\ 
Hybrid EQCNN     &  8&SGD  & $1\times10^{-1}$ & 64 & 20 & L1 Loss            & 10000 (10\%)  & $\approx558.45$  & $\approx7.25$   & 92.00\%$\pm$1.83\% & 92.00\%$\pm$1.16\% \\ 
Hybrid QNN       &  8&SGD  & $1\times10^-1$      & 64 & 20 & L1 Loss            & 10000 (10\%)  & $\approx741.72$  & $\approx5.47$   & 99.00\%$\pm$0.36\% & \textbf{99.00\%$\pm$0.33\%} \\ 
EQCNN            &  8&Adam & $1\times10^{-1}$ & 64 & 20 & L1 Loss            & 10000 (100\%) & $\approx619.76$  & $\approx116.34$ & 91.36\%$\pm$1.63\% & 90.90\%$\pm$1.83\% \\ 
Reflection-EQNN &  8&Adam & $1\times10^{-2}$     & 50 & 50 & Map Loss           & 500 (full)    & $\approx1089.34$ & $\approx0.14$   & 57.00\%$\pm$3.37\% & 68.00\%$\pm$3.19\% \\ 
p4m-EQNN        &  8&Adam & $1\times10^{-2}$     & 50 & 50 & Map Loss           & 500 (subset)  & $\approx1204.67$ & $\approx0.15$   & 53.00\%$\pm$2.47\% & 51.00\%$\pm$4.35\% \\ 
\textbf{Lorentz-EQGNN}        &  \textbf{4}  &  AdamW & $1\times10^{-3}$     & 16 & 50 & Cross Entropy Loss           & \textbf{800 (subset)}  & $\approx1278.31$ &   $\approx100.29$ & 74.96\%$\pm$3.64\% & 74.80\%$\pm$3.45\% \\ \hline
\end{tabular}%
}
\end{table*}

\begin{table*}[!ht]
\centering
\caption{Ablation Studies of Lorentz-EQGNN on Quark-Gluon Jet Dataset using 5-fold Cross-Validation. \textbf{Bold} indicates best performance.}
\label{tab:components}
\begin{tabular}{ l | c c c c c c }
\hline
\textbf{LGEQB Components} & \textbf{$\phi_e$} & \textbf{$\phi_h$} & \textbf{$\phi_m$} & \textbf{$\phi_x$} & \textbf{Fully Quantum} & \textbf{LorentzNet} \\ \hline
\textbf{\# Params}     & 199    & 175    & 243    & 239    & 139    & 1088  \\ 
\textbf{\# Layers}     & 1    & 1    & 1    & 1    & 1    & 6  \\ 
\textbf{Test Accuracy} & 74.00\%$\pm$1.54\%   & 74.00\%$\pm$1.63\%   & 73.00\%$\pm$1.26\%   & \textbf{77.00\%$\pm$1.76\%}   & 46.00\%$\pm$1.88\%   & 74.00\%$\pm$1.45\%  \\ 
\textbf{Test AUC}      & \textbf{87.38\%$\pm$1.52\%} & 83.03\%$\pm$1.63\% & 84.78\%$\pm$1.34\% & 85.59\%$\pm$1.65\% & 78.74\%$\pm$1.27\% & 79.39\%$\pm$1.82\%   \\ 
\textbf{Test Loss}     & 64.65\%$\pm$1.62\% & 65.70\%$\pm$1.67\%  & 58.55\%$\pm$1.62\% & 56.94\%$\pm$1.73\% & 70.72\%$\pm$1.26\% & 62.69\%$\pm$1.78\%  \\ 
\textbf{$\frac{1}{\epsilon_B}$ (${\epsilon_S}=0.3$)}      &  80.5$\pm$7  & 55$\pm$4     & 45$\pm$5   & 26.50$\pm$3.50   & 18$\pm$2      & 25$\pm$6      \\ 
\textbf{$\frac{1}{\epsilon_B}$ (${\epsilon_S}=0.5$)}      & 23$\pm$4     & 11$\pm$2     & 26$\pm$4     & 13.25$\pm$2.25  & 11.50$\pm$2.50   & 12.50$\pm$6 \\ \hline
\end{tabular}%
\end{table*}

\begin{table}
\centering
\caption{Qubit Scalability of Lorentz-EQGNN on Quark-Gluon Dataset}
\label{tab:scalability}
\begin{tabular}{l |ccc} \hline 
\textbf{$\phi_{e}$ Components}  &\textbf{$n_{qubit}$=4}& \textbf{$n_{qubit}$=6}&\textbf{$n_{qubit}$=8}\\ \hline 
\textbf{\# Params} &199  
& 393  & 651 \\ 
\textbf{\# Layers} &1  & 1  & 1  \\ 
\textbf{Test Accuracy}  &74.00\%  & \textbf{80.00\%} & 77.00\%  \\ 
\textbf{Test AUC}  &87.38\% & \textbf{90.11\%}  & 84.94\%  \\ 
\textbf{Test Loss}  &64.65\%  & 68.85\% & 53.94\%  \\ 
\textbf{$\frac{1}{\epsilon_B}$ (${\epsilon_S}=0.3$)}  &80.5  
& 160  & 44  \\ 
\textbf{$\frac{1}{\epsilon_B}$ (${\epsilon_S}=0.5$)}  &23  & 62  & 11  \\ 
\textbf{Training time (s)} &  $\approx499.89$ & $\approx800.29 $  &  $\approx1202.53$ \\
\textbf{Inference time (s)} &  $\approx38.94$ & $\approx61.74$  &  $\approx86.47$ \\ \hline
\end{tabular}
\end{table}

\section{Experiments}\label{sec4}
\subsection{Datasets and Preprocessing}
In this study, we utilized four distinct datasets to evaluate the performance of our benchmarked models: MNIST, FashionMNIST, ECAL images of electrons and photons from the CMS experiment, and synthetic quark-gluon particle jets. For Lorentz-EQGNN, all datasets are split into training (80\%), validation (10\%), and test (10\%) sets, stratified by class, to maintain balance. 

\subsubsection{HEP Datasets}
We evaluated our models on two experimental HEP problems: quark-gluon jet tagging and electron-photon discrimination. These datasets are ideal for testing due to their complexity and the challenges posed by limited computational resources. We employ the \textit{Pythia8 Quark and Gluon Jets for EnergyFlow}~\cite{komiske_energy_2019} dataset, comprising 2 million jets equally partitioned into 1 million quark jets and 1 million gluon jets, to ascertain the origin of a specific jet as either a quark or a gluon. These jets were generated from LHC collisions with a total center-of-mass energy of \(\sqrt{s} = 14\) TeV and were selected to possess transverse momenta \(p_{T,\alpha}^{\text{jet}}\) between 500 and 550 GeV, with rapidities \(|y_{\text{jet}}| < 1.7\). For our investigation, we randomly picked \(N = 12,500\) jets, allocating the first 10,000 for training, the subsequent 1,250 for validation, and the final 1,250 for testing, yielding 4,982, 658, and 583 quark jets in each respective set. We consider the jet dataset as a compilation of point clouds, characterizing each jet as a graph \(G = \{V, E\}\), where \(V\) signifies the set of vertices and \(E\) denotes the edges. Each particle, represented as a node, is treated as a point in Minkowski space $\mathbb{R}^{1,3}$, adhering to the spacetime symmetries of special relativity. The number of nodes per jet varies, capturing the inherent stochasticity of particle collisions. This architecture is optimal for GNNs. In our configuration, each node includes the transverse momentum \(p_{T,\alpha}\), pseudorapidity \(\eta\), azimuthal angle \(\psi\), and other scalar-like attributes such as particle ID and mass for the corresponding constituent particles in the jet. Although the number of features remains constant, the number of nodes within each jet may vary. For this analysis, we focused on jets containing at least ten particles. Likewise, the Electron-Photon~\cite{barnard2017parton} dataset comprises $33 \times 33$ pixel images and poses a binary classification task to determine whether an electron or a photon initiated the event. Each image is converted to a graph where significant points form nodes with eight feature values, capturing energy, position, and angular information. Nodes are connected with edges based on their locations in the image, forming a fully connected graph for each sample. 

\subsubsection{Generic Datasets}
We also used two generic simple datasets for benchmarking. MNIST is a widely used benchmark for image classification tasks. It contains 70,000 grayscale images of handwritten digits, each with a resolution of $28 \times 28$ pixels, split into 60,000 images for training and 10,000 for testing. The dataset comprises 10 classes, representing digits from 0 to 9. Similarly, we utilized the FashionMNIST dataset with 70,000 grayscale images of $28 \times 28$ pixels with more complex visual features. It includes ten classes of clothing items such as T-shirts, trousers, bags, and dresses. Each image is processed to extract up to 10 significant points based on pixel intensity thresholds. A 4-dimensional feature vector (representing normalized $x$ and $y$ coordinates, intensity, and a zero placeholder) is created for each point. Fully connected edges are created between nodes within each image.

\subsection{Experimental Configuration}
The experimental setup utilized a combination of GPUs and CPUs to accelerate the training and testing of all the benchmarked models in this study. Hardware resources included two NVIDIA T4 GPUs (2560 CUDA cores, 16 GB each) and a Google TPU (8 v3 cores, 128 GB), ensuring efficient parallel computations for large datasets. An Intel Xeon CPU (4 vCPUs at 2 GHz, 18 GB RAM) also supported general preprocessing and computation tasks. We utilized EnergyFlow 1.3.2~{\cite{komiske_energy_2019}} to download and read the quark-gluon dataset. The framework for classical computation utilized Python 3.10.12 and PyTorch, while PQC development, execution, and visualizations were implemented using PyTorch Geometric 2.6.1, PennyLane 0.38.0, Qiskit 1.2.4, and Pennylane-Qiskit 0.38.1. Preprocessing steps involved standardizing input data formats to be compatible with both classical and quantum pipelines. A complete description of the experimental tool flow is available in the repository.

\section{Results and Discussion}\label{sec5}
\subsection{Performance Comparison}
Lorentz-EQGNN outperforms baselines in both accuracy and robustness. Detailed results are provided in Tables \ref{tab:qg}, \ref{tab:ep}, \ref{tab:mnist} and \ref{tab:fashion-mnist}. We presented the mean and standard deviation of the 5-fold cross-validation outcomes for training and testing. We employed standard metrics to assess performance, such as test accuracy, AUC, and background rejection $\frac{1}{\epsilon_B}$ at signal efficiencies of ${\epsilon_S}$ = 0.3 and 0.5. In this context, ${\epsilon_B}$ denotes the false positive rate, while ${\epsilon_S}$ represents the true positive rate. The background rejection metric holds significant importance in selecting the optimal algorithm, as it directly relates to the anticipated background contribution~\cite{gong_efficient_2022}.

Lorentz-EQGNN demonstrates robust performance even under data scarcity,  with only 800 training samples across all datasets using 2 times fewer qubits. To further validate and improve performance in data-scarce scenarios, we propose incorporating self-supervised learning techniques, such as contrastive learning, to leverage unlabeled data effectively. Additionally, symmetry-preserving data augmentation methods can enhance the model's generalization by creating diverse yet physically consistent training samples. Transfer learning from related HEP datasets offers another pathway to mitigate data limitations, enabling the model to adapt pre-trained representations to new tasks with minimal labeled data.

\subsubsection{Results for Quark-Gluon}
Lorentz-EQGNN achieves the highest test accuracy of 74.00\% (AUC: 87.38\%) on the quark-gluon dataset, significantly outperforming all the approaches (Table~\ref{tab:qg}). This gain comes with efficient training ($\approx500$s) and inference time (38.94s) only using 800 training subsets and half qubits compared to the others, indicating that Lorentz-EQGNN can efficiently capture the particle graph data. The significant gap between our test accuracy (74.00\%) and other quantum methods ($\approx 60\%$) demonstrates the advantage of incorporating Lorentz symmetry while avoiding the overfitting seen in the Classical CNN (91.09\% train vs 57.38\% test accuracy).

\subsubsection{Results for Electron-Photon}
For the Electron-Photon dataset (Table~\ref{tab:ep}), the Classical CNN achieves a slightly higher test accuracy (70.28\%) compared to Lorentz-EQGNN (67.00\%). However, Lorentz-EQGNN achieves this competitive performance (AUC: 68.20\% 	 $\frac{1}{\epsilon_B} (\epsilon_S = 0.3)$: 10 	 $\frac{1}{\epsilon_B} (\epsilon_S = 0.5)$: 8.33) using only 800 training samples with 4-qubit and single depth size, while the Classical CNN relies on 160,000 samples. Training efficiency per data point remains challenging, as Lorentz-EQGNN requires more time to train on each sample. This trade-off between data efficiency and computational cost highlights an important consideration in QML. While quantum approaches can extract meaningful features from limited data, the quantum processing overhead must be balanced against the benefits of reduced data requirements.

\subsubsection{Results for MNIST}
As shown in Table~\ref{tab:mnist}, traditional models excel on MNIST, with the Hybrid QNN achieving 100\% test accuracy and the Classical CNN reaching 99.86\%. While Lorentz-EQGNN shows competitive performance at 88.10\% (AUC: 94.35\%, $\frac{1}{\epsilon_B} (\epsilon_S = 0.3)$: 83.33, $\frac{1}{\epsilon_B} (\epsilon_S = 0.5)$: 35.71), using only a 4-qubit single PQC layer, highlighting its efficiency for generic tasks apart from HEP. The performance gap on this classical computer vision task is expected, as MNIST's pixel-based structure doesn't inherently benefit from Lorentz symmetry preservation. However, achieving reasonable accuracy with minimal quantum resources demonstrates the architecture's versatility beyond its primary physics applications.

\subsubsection{Results for FashionMNIST}
On the FashionMNIST dataset (Table~\ref{tab:fashion-mnist}), Classical CNN and Hybrid QNN show strong performance, achieving 98.50\% and 99.00\% test accuracy, respectively. Lorentz-EQGNN achieves 74.80\% accuracy (AUC: 83.20\%, $\frac{1}{\epsilon_B} (\epsilon_S = 0.3)$: 33.33, $\frac{1}{\epsilon_B} (\epsilon_S = 0.5)$: 12.50), requiring only 800 training samples and fewer qubits. The performance pattern here mirrors that of MNIST, but with a more pronounced gap, reflecting FashionMNIST's greater complexity. Interestingly, our model maintains consistent performance between training (74.96\%) and testing (74.80\%), suggesting robust generalization despite the reduced quantum resources and training data. This stability across different datasets indicates that our architecture's symmetry-preserving properties contribute to reliable learning, even when applied to non-physics domains.

\subsection{Ablation Studies}\label{ablation}
We obtained the results for the ablation studies by substituting $\phi_e$, $\phi_h$, $\phi_m$, and $\phi_x$ with the 4-qubit parameterized dressed quantum circuits one at a time, similar to how it's presented in Table \ref{tab:1}. This replacement approach enabled us to isolate and evaluate the quantum advantage of each module within the network architecture. We also examined the scenario where all four modules—$\phi_e$, $\phi_h$, $\phi_m$, and $\phi_x$—are replaced with quantum circuits, as well as with the fully classical model LorentzNet, which does not utilize any PQC~\cite{gong_efficient_2022}. Notably, the fully quantum version showed degraded performance, suggesting that a hybrid quantum-classical approach strikes a better balance between quantum advantages and classical stability.

The results for the quark-gluon dataset, spanning 50 epochs and employing 5-fold cross-validation, are summarized in Table \ref{tab:components}. Table \ref{tab:components} indicates that the Lorentz-EQGNN, equipped with the quantum-integrated $\phi_e$ module, attains superior performance on the quark-gluon dataset, especially for AUC and background rejection at $\epsilon$ = 0.3 and 0.5. The superior performance of the $\phi_e$ module suggests that quantum circuits are particularly effective at processing edge features, which naturally encode the fundamental particle interactions in the jet structure.  The ablation tests validate the efficacy of the Lorentz-EQGNN relative to its classical counterpart, obtaining around $\approx5.5$ times fewer parameters, a six-fold reduction in depth size, and exhibiting an enhancement in background rejection by roughly two to three times. This significant reduction in model complexity while maintaining superior performance indicates that quantum circuits can capture the underlying physics more efficiently than their classical counterparts, potentially by leveraging quantum mechanical principles inherent to particle physics problems.

\subsection{Scalability and Qubit Utilization}
The scalability of Lorentz-EQGNN was evaluated with 4, 6, and 8 qubit configurations (Table~{\ref{tab:scalability}}). Increasing to 6 qubits provides the best balance between performance and resource efficiency, significantly enhancing test accuracy, AUC, and background rejection compared to 4 qubits while keeping computational costs manageable. However, scaling to 8 qubits results in diminishing returns, with reduced accuracy and AUC, overfitting risks, and increased training and inference times, highlighting the trade-offs between richer quantum representations and hardware limitations.

We acknowledge the limitations of current quantum hardware, including constraints on qubit count, circuit depth, noise, gate fidelity, and decoherence. Lorentz-EQGNN is tailored for NISQ devices, leveraging 4-qubit circuits with alternating entanglement and parameterized rotation layers to balance expressiveness and mitigate decoherence. While scalability to more qubits is theoretically possible, it requires careful optimization to avoid diminishing returns. Future directions include advanced error mitigation, circuit pruning, and transitions to fault-tolerant quantum hardware with error correction mechanisms (e.g., surface codes), enabling scalability to larger datasets and complex symmetries.

\section{Conclusion and Future Work}\label{sec6}
We introduce a Lorentz-EQGNN tailored for HEP problems, leveraging quantum-enhanced computation for robust and symmetry-preserving performance. By implementing efficient message propagation, the Lorentz-EQGNN upholds Lie invariance, reduces parameter counts, and minimizes computational overhead, improving generalization, especially in data-scarce scenarios. Evaluations on a small subset of two HEP-specific tasks and two generic image classification tasks demonstrate its competitive performance using only a single depth 4-qubit quantum circuit. Our Lorentz-EQGNN outperforms its classical state-of-the-art counterpart, LorentzNet, while using significantly fewer parameters, LGEQB layers, circuit depth, and data samples. The integration of PQCs efficiently replaces MLPs through symmetry preservation, and Lorentz symmetry enhances relativistic invariance handling, while the hybrid architecture enables scalable performance on NISQ devices. The model’s adherence to IRC safety ensures that soft particle emissions have negligible effects, reinforcing its applicability to various particle physics challenges such as jet mass regression, full 4-momentum reconstruction, fast jet simulation, event classification, and pileup mitigation. The broader implications of this work extend beyond particle physics, as our hybrid quantum-classical architecture provides a practical blueprint for quantum-enhanced neural networks in domains where quantum mechanics plays a fundamental role, such as molecular dynamics and quantum chemistry. Future research may explore advanced optimizations to further reduce computational overhead and improve scalability on larger datasets. Adaptations to handle variations in node numbers and more complex graph structures will expand Lorentz-EQGNN's applicability to broader HEP challenges. Investigating enhanced error mitigation strategies and leveraging advancements in fault-tolerant quantum hardware will further strengthen the model's utility in both experimental and theoretical physics.

\ifCLASSOPTIONcaptionsoff
  \newpage
\fi





\bibliographystyle{IEEEtran}
\bibliography{IEEEabrv,main}

\vfill


\end{document}